\documentclass{article} 
\usepackage[final]{colm2026_conference}

\usepackage{microtype}
\usepackage{hyperref}
\usepackage{url}
\usepackage{booktabs}
\usepackage{bbm}
\usepackage{amsmath}
\usepackage{amsthm}
\newtheorem{proposition}{Proposition}
\newtheorem{remark}{Remark}
\usepackage{multirow}
\usepackage{graphicx}
\usepackage{subcaption}
\usepackage[table]{xcolor}

\usepackage{lineno}

\definecolor{darkblue}{rgb}{0, 0, 0.5}
\hypersetup{colorlinks=true, citecolor=darkblue, linkcolor=darkblue, urlcolor=darkblue}

\title{Weightless Fine-Tuning: Personalizing LLMs via Logit-Space Transport}

\usepackage{graphicx}

\author{
\resizebox{0.95\textwidth}{!}{
\begin{tabular}{cccc}
\bfseries Bohan Zhang
&
\bfseries Anqi Ni
&
\bfseries Yixin Wang
&
\bfseries Paramveer S. Dhillon
\\
\normalfont University of Michigan
&
\normalfont University of Chicago
&
\normalfont University of Michigan
&
\normalfont University of Michigan
\\
\normalfont\texttt{zbohan@umich.edu}
&
\normalfont\texttt{anqini4@gmail.com}
&
\normalfont\texttt{yixinw@umich.edu}
&
\normalfont\texttt{dhillonp@umich.edu}
\end{tabular}
}
}

\begin{document}

\ifcolmsubmission
\linenumbers
\fi

\maketitle

\begin{abstract}
Supervised fine-tuning (SFT) is a standard approach for adapting LLMs to a target distribution, but in settings such as personalization, where each author requires separate weight access, optimization, storage, and retraining, its costs become prohibitive. We propose Weightless Fine-Tuning (WFT), a training-free decoding-time method that approximates the distributional effect of SFT without weight updates. WFT computes supervised residuals on an author's training sequence and transports them to the current prompt through a cross-prefix transport operator estimated from dropout-induced cross-covariance. The operator captures how a perturbation at one context propagates to predictions at another, replacing gradient-based parameter updates with logit-space corrections. On three LaMP personalization benchmarks, WFT achieves the best average performance across datasets, matches or exceeds SFT on individual tasks, and outperforms other lightweight baselines on average. In a budget-controlled comparison, WFT approaches SFT performance using less than 7\% of the effective computation. Logit-level analysis shows a cosine similarity of 0.875 between the logit shifts induced by WFT and SFT over 95\% of the next-token probability mass, suggesting that WFT captures the distributional effect of supervised adaptation without modifying model weights.
\end{abstract}

\section{Introduction}
Large language models (LLMs) have achieved impressive performance across a wide range of tasks, yet many real-world applications require outputs that are tailored to individual authors or domains rather than a single generic response style. Personalized generation has therefore become an important setting for LLM deployment. In such settings, the strongest adaptation quality often comes from supervised fine-tuning (SFT) on author-specific data~\citep{chakrabarty2025readerspreferoutputsai}. The practical value of SFT lies in the output distributions induced by the updated weights. When a language model is fine-tuned on an author’s writing, the objective is to shift its next-token predictions toward patterns characteristic of that author rather than to preserve a particular parameter configuration. Yet the dominant method for producing this shift still relies on gradient-based parameter updates, which require costly training. In personalization, the cost scales poorly. If hundreds or thousands of authors each need their own adaptation, practitioners must run separate fine-tuning jobs, store separate weights, and retrain the model whenever new author data arrives.
 
Existing approaches span a wide spectrum, but each comes with clear limitations. Full SFT remains a strong quality benchmark for personalized generation~\citep{salemi-etal-2024-lamp,chakrabarty2025readerspreferoutputsai,chakrabarty2026goodwritinggenerativeexpertlevel}, but its computational and storage costs scale linearly with the number of authors, and training on narrow personal data risks catastrophic forgetting of general capabilities~\citep{kaushik2021understandingcatastrophicforgettingremembering}. Parameter-efficient personalization methods reduce the number of author-specific parameters while still requiring backpropagation~\citep{liu-etal-2023-recap,huber2025embeddingtoprefixparameterefficientpersonalizationpretrained}. Personalization here is typically achieved through learned prefix representations trained across many authors. Their performance depends on how well those representations generalize across authors, which becomes less reliable when author populations are highly diverse. In-context prompting avoids training altogether by prepending a small number of author examples to the prompt. Yet it conditions on those examples without explicitly computing the residual between the model’s current prediction and the desired target, and its performance typically remains below methods that perform supervised adaptation~\citep{salemi-etal-2024-lamp,zhang2025personalize}.
 
Recent work on decoding-time adaptation suggests that the community is increasingly aware of this tension. Drift~\citep{kim2025drift} aligns decoding with user preferences through interpretable attributes. Amulet~\citep{zhang2025amulet} treats token prediction as an online learning problem, and CoSteer~\citep{lv2025costeer} uses logit deltas from a local model to steer a larger one. In parallel, test-time training approaches~\citep{sun2024learning, hu2025test} push adaptation to the test phase but still rely on lightweight parameter updates such as LoRA fine-tuning. In-context vectors~\citep{icvliu2024} extract task-relevant directions from hidden activations and use them to steer model behavior at inference time without updating model parameters. CHAMELEON~\citep{zhang2025personalize} generates synthetic author preference data from limited author history and performs inference-time representation editing for scalable personalization. These efforts collectively demonstrate that adaptation need not happen before deployment, but none of them directly addresses a more fundamental question: \emph{Can we approximate the change in next-token distributions that SFT would produce, without updating any parameters at all?}

In this paper, we answer this question with \textbf{Weightless Fine-Tuning (WFT)}, a training-free fine-tuning method that operates entirely in logit space. The key idea is to compute standard supervised residuals on a given training sequence, and then \emph{transport} their effect to the current prompt prefix through a cross-prefix transport operator $M$. This operator, estimated via the cross-covariance of dropout-perturbed forward passes, serves as an empirical approximation to the cross-prefix neural tangent kernel (NTK) and captures how a perturbation at one training context propagates to predictions at another through weights. The result is a lightweight procedure that approximates the distributional effect of SFT without modifying the model's weights.
 
We evaluate WFT on three generative personalization tasks from the LaMP benchmark, including paper title generation, news headline generation, and tweet paraphrasing. WFT achieves the best average performance across datasets on both foundation models, while outperforming or remaining highly competitive with SFT on individual tasks. It also consistently outperforms other lightweight personalization baselines on average. Under budget-controlled comparison, WFT reaches performance comparable to SFT while using less than 7\% of the effective computation. Moreover, ablation and qualitative logit-level analyses support the view that WFT serves as a practical training-free approximation to supervised adaptation.

Our contributions are as follows:
\begin{itemize}
    \item We propose Weightless Fine-Tuning, a training-free method that approximates SFT in distribution space by transporting supervised residuals across prefixes via a dropout-estimated cross-prefix transport operator.
    \item We conduct comprehensive experiments on three personalization benchmarks, showing that WFT achieves the best average performance across datasets, remains competitive with SFT on each task, and outperforms other lightweight personalization baselines.
\end{itemize}

\section{Related Work}

Personalization of LLMs has attracted growing attention. Recent surveys have summarized the literature from several perspectives.~\citet{chen2024large} discussed how LLM-based personalization extends earlier paradigms of passive information filtering toward more active forms of author engagement.~\citet{zhang2024personalization} organized existing methods into retrieval-augmented generation, prompt engineering, representation learning, and RLHF.~\citet{liu2025survey} categorized personalization methods by the stage at which they are introduced, including input-level prompting, model-level fine-tuning, and objective-level alignment. WFT spans these categories. It avoids parameter updates, as prompting-based methods do, while using supervised residuals to correct the model’s output distribution in a way that resembles fine-tuning.
 
Per-author SFT remains a strong personalization baseline.~\citet{chakrabarty2025readerspreferoutputsai} showed that fine-tuning on an author’s full corpus can generate text that readers judge to match that author’s style closely. The cost of maintaining a separate model for each author, however, has motivated more efficient alternatives. One line of research builds on prefix-tuning~\citep{li-liang-2021-prefix}, which introduced continuous prefix tokens for steering model behavior while keeping the backbone frozen. In personalized generation, RECAP~\citep{liu-etal-2023-recap} combines retrieval with a context-aware prefix encoder for dialogue, and Embedding-to-Prefix~\citep{huber2025embeddingtoprefixparameterefficientpersonalizationpretrained} maps pre-learned author embeddings to soft prefix tokens. These methods reduce per-author training costs, but they still rely on gradient-based optimization over large multi-author corpora and perform well only when the training population is generalizable to test authors.
 
Recent work has increasingly moved adaptation to inference time. Some methods perform lightweight parameter updates at test time. Test-Time Training~\citep{sun2024learning} updates the model on each test instance, and later work extends this idea to language models with LoRA-based updates~\citep{hu2025test}. Other methods modify the output distribution during decoding without changing model parameters. Drift~\citep{kim2025drift} aligns outputs with author preferences through interpretable attributes. Amulet~\citep{zhang2025amulet} treats each decoding step as an online learning problem for real-time preference adaptation. CoSteer~\citep{lv2025costeer} steers a cloud-hosted model with logit deltas from a locally adapted smaller model, and CHAMELEON~\citep{zhang2025personalize} combines synthetic preference data with inference-time representation editing. WFT derives its logit corrections from supervised residuals and natural gradient descent. In doing so, it aims to reproduce the distributional effect of SFT rather than to steer generation toward a particular attribute.
 
Theoretical work has examined links between in-context learning and gradient-based optimization.~\citet{von2023transformers} showed that linear self-attention can implement gradient descent in the forward pass, and~\citet{dai2023can} analyzed ICL in a dual form that connects it to implicit fine-tuning. Later studies placed clearer limits on this equivalence.~\citet{shen2023pretrained} found substantial gaps between ICL and gradient descent in pretrained language models on realistic NLP tasks, and~\citet{deutch2024context} questioned whether the connection holds beyond carefully controlled settings. WFT builds on the mathematical connection between contextual conditioning and optimization to define a training-free procedure that approximates the distributional effect of SFT. This formulation does not assume that in-context learning itself implicitly performs gradient descent through the forward pass.

\section{Weightless Fine-Tuning}

We present Weightless Fine-Tuning as a training-free approximation to SFT in distribution space.  The starting observation is that SFT changes the next-token distribution only \emph{indirectly}: a supervised gradient step modifies the model's weights, which in turn shifts the output logits, which finally alters the predictive distribution. Because this causal chain terminates in logit space, we can short-circuit it by computing the logit shift that would result from a weight update and applying it directly, without ever modifying the weights themselves. Concretely, WFT shows that the effect of one or multiple supervised natural-gradient updates on a training sequence can be transferred to a different prompt context through a cross-prefix transport operator $M$, yielding a next-token distribution that approximates the one fine-tuning would have produced. Hence, WFT directly modifies the logits at generation time without updating model parameters.

Let $z(c)\in\mathbb{R}^V$ denote the next-token logits of a frozen language model for a prefix $c$, and $p(c)=\mathrm{softmax}(z(c))$ denote the corresponding probabilities. 

We distinguish two sequences: 1) \textbf{Prompt prefix}: $x_{1:t}$. This is usually the task prompt, for example: ``Generate a title for the abstract of a research paper:'' 2) \textbf{Training sequence}: $\tilde{x}_{1:S}$. This could be potential training data used in SFT training, such as the author’s previous research paper titles. For the training sequence, the next-token target at position $s$ is $\tilde{x}_{s+1}$. We define the standard supervised residual as
\[
r_s = e_{\tilde{x}_{s+1}} - p(\cdot \mid \tilde{x}_{1:s}),
\]
where $e_{\tilde{x}_{s+1}} \in \mathbb{R}^V$ is the one-hot vector of the ground-truth next token. Intuitively, $r_s$ measures how the model’s current prediction differs from the supervised target at training position $s$. 

A KL-proximal natural-gradient (NGD) update with step size $\eta$ in logit direction $g$ solves
\[
q = \arg\max_r \left\{\eta\,\mathbb{E}_r[g] - \mathrm{KL}(r\|p)\right\}
\quad\Longrightarrow\quad
q \propto p\,e^{\eta g}.
\]
NGD is thus multiplicative in probability and additive in logits: if a gradient step on weights induces a logit shift $\Delta z_t$ at prompt time $t$, adding that shift to the original logits exactly recovers the post-weight-update distribution. This additivity is what makes a weight-free emulation possible.

\subsection{Cross‑Prefix Transport Operator and Its Estimation by Dropout Covariance}\label{sec:M}
An NGD step with residual $r_s$ induces a weight parameter change $\Delta\theta$ that, under local linearization, perturbs the logits at any prefix $c$ by $J_c\Delta\theta$, where $J_c$ is the Jacobian of the logits with respect to the weights. The weights thus serve as a \emph{medium} through which a training-side correction propagates to the prompt side. WFT replaces this implicit, weight-mediated propagation with an explicit cross-prefix transport operator $M_{t\leftarrow s}\in\mathbb{R}^{V\times V}$ that directly maps the residual at training position $s$ to the logit change at prompt position $t$. Concretely, we write the one-step logit update at prompt time $t$ as
\[z_t^{(1)} = z_t^{(0)} + \eta \sum_{s=1}^{S} M_{t \leftarrow s} r_s.\]
Equivalently, the induced logit correction is
$\Delta z_t = \eta \sum_{s=1}^{S} M_{t \leftarrow s} r_s.$

Thus, $M_{t \leftarrow s}$ is the core object in WFT: it tells us how an SFT-style correction computed on the training sequence should be transported to the current decoding prefix, capturing the role the shared weights would have played, but without touching them.

To estimate $M_{t \leftarrow s}$ without access to gradients of the full fine-tuning process, we rely on stochastic forward passes induced by dropout. We enable dropout at inference time and run $K$ forward passes on both the prompt prefix $x_{1:t}$ and the training prefix $\tilde{x}_{1:s}$. For each index \(k\), the prompt and training prefixes use the same dropout realization, while different indices use sampled masks independently. Let
\[
t_k = z^{[k]}(x_{1:t}) - \overline{z(x_{1:t})},
\qquad
s_k = z^{[k]}(\tilde{x}_{1:s}) - \overline{z(\tilde{x}_{1:s})},
\]
where $z^{[k]}(\cdot)$ denotes the logits from the $k$-th dropout realization (the dropout parameters are fixed in each pass but different across passes), and the bar denotes the empirical mean over the $K$ samples. We then form the empirical cross-covariance and self-covariance:
\[
\Sigma_{t,s} = \frac{1}{K-1} \sum_{k=1}^{K} t_k s_k^\top,
\qquad
\Sigma_{s,s} = \frac{1}{K-1} \sum_{k=1}^{K} s_k s_k^\top.
\]
Using a ridge-regularized linear estimator, we define
\[
\widehat{M}_{t \leftarrow s}
=
\Sigma_{t,s}(\Sigma_{s,s} + \lambda I)^{-1}.
\]

This estimator can be interpreted as an empirical approximation to the cross-prefix NTK action: it measures how a small perturbation at the training prefix is propagated to the prompt prefix in the local linear regime. Proposition~\ref{prop:ntk} formalizes this, showing that $\widehat{M}_{t\leftarrow s}$ converges to a ridge-regularized form of the true cross-prefix NTK action.

\subsection{Updating the Decoding Logits Using $M$}
Once $\widehat{M}_{t \leftarrow s}$ is available, we can compute the prompt-side logit shift directly from the supervised residuals as defined in Sec.~\ref{sec:M}. Under the assumptions of Proposition~\ref{prop:ntk}, the updated distribution approximates the one obtained from a one-step supervised natural-gradient update, so WFT can be viewed as a training-free emulator of one-step SFT in distribution space. Under multiple steps of update, to ensure consistency with the observed
update at arbitrary step $j$, we require:
\[
z^{(j+1)}_t = z^{(j)}_t + \eta\sum_s M^{(j)}_{t\leftarrow s} r^{(j)}_s,
\]
where the supervised residual $r_s^{(j)} = y_s - \mathrm{softmax}(z_s^{(j)})$
depends on the current training-side logits $z_s^{(j)}$, which are updated by
the local residual gradient:
$
z^{(j+1)}_s = z^{(j)}_s + \eta\, r^{(j)}_s.
$
The residual is therefore recomputed exactly at each step from the updated
logits. Therefore, after $k$ steps, the total prompt-side logit shift is
\[
\Delta z_t^{(k)}
=
\eta \sum_{j=0}^{k-1} \sum_{s=1}^{S}
M_{t \leftarrow s}^{(j)}\, r_s^{(j)}.
\]
Adopting a time-homogeneous approximation $M_{t \leftarrow s}^{(j)} \equiv M_{t \leftarrow s}$, this simplifies to
\[
\Delta z_t^{(k)}
=
\eta \sum_{s=1}^{S} M_{t \leftarrow s}
\sum_{j=0}^{k-1} r_s^{(j)}.
\]

This approximation is not exact because the operators can change as the logits evolve. When the updates remain small, however, fixing them still gives a useful first-order approximation and leads to a closed-form expression for the accumulated logit shift. Also, we write \(M_{t\leftarrow s}\) as a \(V\times V\) operator for clarity, but WFT does not materialize this large matrix in the implementation. The practical algorithm uses the vocabulary-coordinate-wise
dropout-factored approximation described in Appendix~\ref{app:acc_update}. Overall, WFT replaces the weight-update pathway of SFT with logit-space transport of supervised residuals. Its key object is the operator $M$, which specifies how learning signals from a training sequence modify predictions on a different prompt. This provides a lightweight approximation to SFT without changing a single model weight.

\section{Experiments}

\subsection{Datasets}
We evaluate our method using the LaMP benchmark~\citep{salemi-etal-2024-lamp}, which collects different personalization tasks. Our evaluation focuses on three generative datasets in this benchmark: 1) Personalized News Headline Generation. 2) Personalized Paper Title Generation. 3) Personalized Tweet Paraphrasing. Each author in the dataset has a certain amount of personal writing. For example, in the paper title generation task, an author has multiple historical pairs of paper abstracts and their corresponding titles. We follow the evaluation metrics used in the LaMP benchmark, namely ROUGE-1 (R-1) and ROUGE-L (R-L), between the generated outputs and the ground truth. More details about the dataset can be found in the Appendix~\ref{app:data}.
\subsection{Baselines}
We compare WFT with four baselines: 1) \textbf{SFT} is the standard approach for adapting an LLM to a target task or author distribution. 
Past research~\citep{chakrabarty2025readerspreferoutputsai} shows that per-author SFT on an author’s full corpus can generate text that human readers often judge as matching that author’s style well. This shows that per-author SFT is a strong baseline for personalization tasks. For each author, the model is tuned on author-specific examples using the next-token prediction objective. 
Given a personalized prompt–response pair, the model minimizes cross-entropy loss on the response tokens conditioned on the input context. 

2) \textbf{CHAMELEON}~\citep{zhang2025personalize} personalizes LLMs by first generating synthetic preference data from limited author history and then performing inference-time representation editing. It is a strong training-free personalization baseline on the LaMP dataset that avoids per-author fine-tuning while using only author-specific historical data. 

3) \textbf{Prefix-tuning-related} methods have shown strong performance across multiple personalization tasks. The core idea is to find a discriminative embedding for each author, for example, by using author profiles or by learning an author embedding in advance. Building on the representation learning strategy in~\cite{liu-etal-2023-recap} and the framework of~\cite{huber2025embeddingtoprefixparameterefficientpersonalizationpretrained}, we employ contrastive learning to derive author embeddings from historical texts and transform these embeddings into prefix tokens for personalized prefix-tuning to better adapt to the datasets.

4) \textbf{In-Context Prompting} has shown good baseline performance~\citep{salemi-etal-2024-lamp} in many personalization tasks. Following~\cite{chakrabarty2025readerspreferoutputsai}, the model is given five excerpts from the target author along with the specific task prompt. Based on these excerpts, the model generates a response that matches the author’s style. We use Qwen3-8B~\citep{yang2025qwen3technicalreport} and Llama-3.1-8B-Instruct~\citep{grattafiori2024llama} as the foundation models for all methods mentioned above. The details of the above methods are provided in Appendix~\ref{app:methods}.

\subsection{Budget Control Training}
SFT performance is heavily influenced by the training budget. As discussed in Section~\ref{sec:comp-analysis}, the budget required for SFT is significantly larger than that of WFT. Therefore, besides the performance comparison of the models under a sufficient budget as described above, we aim to examine how different budget levels affect SFT performance and whether SFT can maintain its performance when its budget is lowered to that of WFT. 

Intuitively, as the computational budget increases, the performance of SFT tends to improve. However, under a fixed training budget, prior work reports mixed results on whether it is better to use more training data or to spend the budget on more training steps~\citep{muennighoff2023scaling,kopiczko2026datarepetitionbeatsdata}. These outcomes likely depend on factors such as task type and the quality of the training data~\citep{yin2024compute-constrained,lagasse2025scalinglawtokenefficiency}. 

Therefore, we explore the relationship between budget and performance under two scenarios in the paper dataset: 1) When the ratio of the number of data points $N$ to the training epochs $E$ is small, i.e., a small dataset trained for many epochs, ensuring that the model fits the training data well. 2) When $N/E$ is large, i.e., a large dataset trained for relatively few epochs. In both scenarios, we vary the training budget $B$ by jointly adjusting $N$ and $E$ such that $B = N \times E$, and train per-author SFT models for each configuration. This allows us to trace performance as a function of B while comparing how different allocations affect the results. $N/E$ is set to 0.2 in the first scenario and 2 in the second scenario. The experiments in this section are conducted using Qwen3-8B.

\section{Empirical Studies}
\subsection{Main Results}
The performance of different methods using Qwen3-8B and Llama-3.1-8B-Instruct is shown in Tables~\ref{tab:results} and~\ref{tab:llama_results}, respectively.

On Qwen3-8B, WFT and SFT achieve the strongest results across the three LaMP datasets. WFT attains the best R-1 score on Twitter and the strongest performance on News, whereas SFT performs best on Paper and obtains the highest R-L score on Twitter. Averaged across datasets, WFT obtains the highest scores on both R-1 and R-L. WFT and SFT also outperform in-context prompting, prefix-tuning, and CHAMELEON on average. Although CHAMELEON improves over in-context prompting and prefix-tuning, it still falls short of WFT and SFT overall. On Twitter and News, prefix-tuning performs no better than in-context learning, likely because it struggles to learn effective author representations, as reflected by an authorship detection accuracy of only around 30\%.

On Llama-3.1-8B-Instruct, WFT again achieves the highest average R-1 and R-L scores. It obtains the best R-1 result on Paper and Twitter and the best R-L result on Paper, while remaining competitive on the other metrics. Prefix-tuning performs best on Twitter R-L and News R-1, and CHAMELEON achieves the highest
News R-L score. Overall, WFT retains the strongest average performance across the three datasets. The results across both foundation models show that WFT provides a strong training-free alternative to SFT.

\begin{table}[t]
\centering
\begin{tabular}{llccccc}
\toprule
Dataset & Metric & WFT & SFT & In-Context & Prefix-tuning & CHAMELEON \\
\midrule
\multirow{2}{*}{Paper} 
 & R-1 & \underline{0.440} & \textbf{0.461} & 0.372 & 0.427 & 0.404\\
 & R-L & \underline{0.398} & \textbf{0.399} & 0.311 & 0.372 & 0.339\\
\midrule
\multirow{2}{*}{Twitter} 
 & R-1 & \textbf{0.416} & \underline{0.388} & 0.337 & 0.304 & 0.360\\
 & R-L & \underline{0.363} & \textbf{0.369} & 0.315 & 0.269 & 0.319\\
\midrule
\multirow{2}{*}{News} 
 & R-1 & \textbf{0.148} & \underline{0.114} & 0.105 & 0.102 & 0.129\\
 & R-L & \textbf{0.128} & \underline{0.111} & 0.099 & 0.101 & 0.106\\

\midrule
\multirow{2}{*}{Avg.}
 & R-1 & \textbf{0.335} & \underline{0.321} & 0.271 & 0.278 & 0.298\\
 & R-L & \textbf{0.296} & \underline{0.293} & 0.242 & 0.247 & 0.255\\
\bottomrule
\end{tabular}
\caption{Performance comparison of different methods on three LaMP benchmarks using Qwen3-8B as the foundation model. WFT achieves the best average performance across datasets, while remaining competitive with SFT on each individual task. The best-performing results are bolded, and the second-best results are underlined.}
\label{tab:results}

\end{table}

\begin{table}[t]
\centering
\begin{tabular}{llccccc}
\toprule
Dataset & Metric & WFT & SFT & In-Context & Prefix-tuning & CHAMELEON \\
\midrule
\multirow{2}{*}{Paper} 
 & R-1 & \textbf{0.413} & 0.382 & 0.283 & 0.364 & \underline{0.396} \\
 & R-L & \textbf{0.338} & 0.321 & 0.239 & 0.327 & \underline{0.335} \\
\midrule
\multirow{2}{*}{Twitter} 
 & R-1 & \textbf{0.318} & 0.281 & 0.203 & \underline{0.302} & 0.277 \\
 & R-L & \underline{0.260} & 0.242 & 0.167 & \textbf{0.263} & 0.248 \\
\midrule
\multirow{2}{*}{News} 
 & R-1 & \underline{0.131} & 0.115 & 0.109 & \textbf{0.136} & 0.127 \\
 & R-L & \underline{0.113} & 0.095 & 0.090 & 0.111 & \textbf{0.118} \\
\midrule
\multirow{2}{*}{Avg.}
 & R-1 & \textbf{0.287} & 0.259 & 0.198 & \underline{0.267} & \underline{0.267} \\
 & R-L & \textbf{0.237} & 0.219 & 0.165 & \underline{0.234} & \underline{0.234} \\
\bottomrule
\end{tabular}
\caption{Performance comparison of different methods on three LaMP benchmarks using Llama-3.1-8B-Instruct as the foundation model. WFT achieves the best average performance across datasets. The best-performing results are bolded, and the second-best results are underlined.}
\label{tab:llama_results}
\end{table}

\subsection{Results of Budget Control Training}
Figure~\ref{fig:combined} shows the performance of SFT as the budget varies under high and low $N/E$ scenarios, respectively. In both scenarios, as expected, the performance of SFT improves as the budget increases. We do not notice that the performance of this task varies with $N/E$. The performance of WFT is roughly comparable to that of SFT with $B = 1,500-2,000$. As discussed in Section~\ref{sec:comp-analysis}, the effective budget $B$ of WFT is about 100, which is about half of the leftmost point in Figure~\ref{fig:combined}. WFT uses no more than 7\% of the budget required by SFT to achieve comparable performance. Therefore, for personalization tasks, when computational resources are limited such that SFT cannot produce effective results, WFT serves as a strong alternative.
\begin{figure}[htbp]
    \centering
\includegraphics[width=\textwidth]{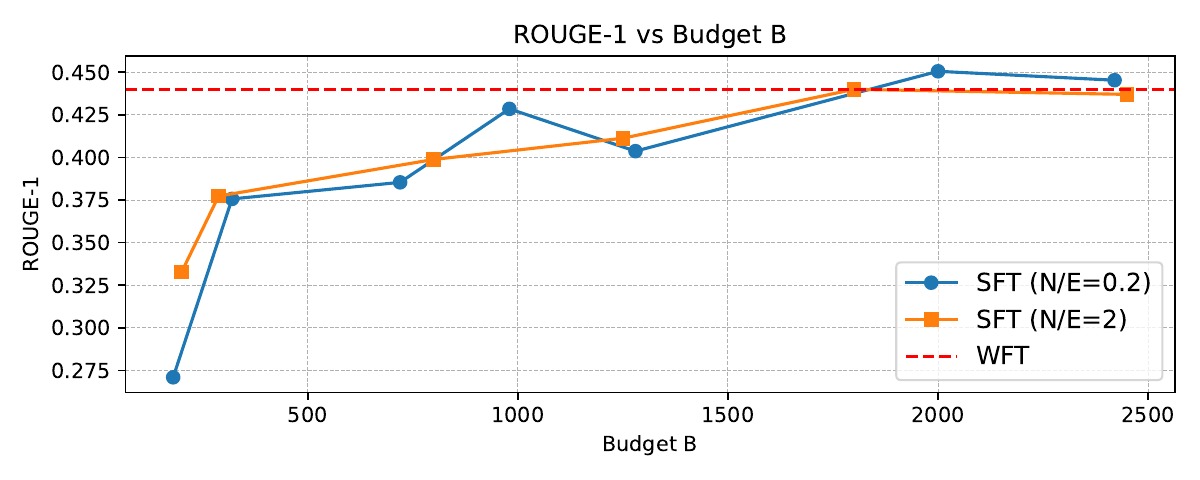}
    \caption{SFT performance as a function of training budget under different training-data-to-epoch ratios ($N/E$). The red dashed line indicates WFT performance.}
    \label{fig:combined}
\end{figure}

\subsection{Ablation Analysis}
We conducted an ablation study by setting M to the identity matrix, reducing the amount of author text used to one-half and one-quarter, and decreasing the number of WFT iteration steps to one-half and one-quarter, in order to examine the effect of each key component on the algorithm.
Table~\ref{tab:ablation} shows that removing the transport operator $M$ causes the largest average drop, confirming that structured transport is central to WFT. Reducing the number of virtual steps also degrades performance, with a larger drop at 1/4 steps than at 1/2 steps, suggesting that multi-step accumulation contributes meaningfully to adaptation quality. Reducing the amount of author history leads to a similar degradation on average, supporting the intuitive idea that using more author data helps personalization tasks. We note that replacing M with the identity matrix does not have a major effect on the Twitter data, which may be because the Twitter training data are similar and related to the task prompts, allowing the residual alone to convey a considerable amount of information.

\begin{table}[t]
\centering
\setlength{\tabcolsep}{5pt}
\begin{tabular}{llcccccc}
\toprule
\multirow{2}{*}{Dataset} & \multirow{2}{*}{Metric} & \multirow{2}{*}{\textbf{Full WFT}} & \multicolumn{1}{c}{Component} & \multicolumn{2}{c}{Steps} & \multicolumn{2}{c}{Data} \\
\cmidrule(lr){4-4} \cmidrule(lr){5-6} \cmidrule(lr){7-8}
 &  &  & w/o $M$ & 1/2 steps & 1/4 steps & 1/2 data & 1/4 data \\
\midrule

\multirow{2}{*}{Paper} 
& R-1 & \textbf{0.440} & 0.350 & 0.406 & 0.372 & 0.388 & 0.348 \\
& R-L & \textbf{0.398} & 0.322 & 0.344 & 0.339 & 0.301 & 0.306 \\

\midrule

\multirow{2}{*}{Twitter} 
& R-1 & \textbf{0.416} & 0.398 & 0.397 & 0.359 & 0.403 & 0.392 \\
& R-L & 0.363 & \textbf{0.368} & 0.363 & 0.296 & 0.354 & 0.333 \\

\midrule

\multirow{2}{*}{News} 
& R-1 & \textbf{0.148} & 0.082 & 0.129 & 0.121 & 0.136 & 0.124 \\
& R-L & \textbf{0.128} & 0.068 & 0.106 & 0.108 & 0.124 & 0.109 \\

\midrule

\multirow{2}{*}{Avg.}
& R-1 & \textbf{0.335} & 0.277 & 0.311 & 0.284 & 0.309 & 0.288 \\
& R-L & \textbf{0.296} & 0.253 & 0.271 & 0.248 & 0.260 & 0.249 \\

\bottomrule
\end{tabular}
\caption{Ablation study of WFT across three LaMP datasets. Removing the transport operator $M$ generally causes the largest degradation, while reducing the number of virtual steps or the amount of author history leads to more gradual performance drops.}
\label{tab:ablation}
\end{table}

\subsection{Human Evaluation}
We conducted a human evaluation to complement the automatic metrics. We randomly sampled 50 examples across the three datasets. For each example, two annotators fluent in English were shown the
target author's historical writing excerpts, the task input, and two anonymized model outputs generated by WFT and SFT. They were asked to judge which output
better matched the author's writing style, considering lexical choice, tone, phrasing, and overall writing pattern. Averaged over the two annotators, WFT was preferred in 31 out of 50 comparisons. This result provides complementary evidence beyond the ROUGE metric that WFT captures author-specific writing style at a level comparable to, and often preferred over, SFT.

\section{Further Analysis and Discussion}
\subsection{Complexity Analysis}\label{sec:comp-analysis}
We theoretically compare the computational cost of WFT and SFT because WFT is designed to approximate SFT in similar usage settings. 

Consider the following variables: let \(E\) denote the number of SFT epochs, \(R\) the number of WFT steps, \(T_f\) the time of a forward pass, and \(T_b\) the time of a backward pass. For WFT, \(K\) denotes the number of forward passes ($K$ different dropout settings). We further use \(S\) to denote the input length, \(V\) the vocabulary size, and \(P\) the total number of model parameters. We assume there is only one training instance; otherwise, it would only introduce an additional constant factor for each author.

For SFT, the computational complexity per author should be: $\mathcal{O}(E(T_f+T_b))$. For WFT, the computational complexity is $\mathcal{O}((KT_f+g))$ , where $g$ is the cost of matrix computations per instance. According to the algorithm detailed in Appendix~\ref{app:acc_update}, we have $g=O(RSK^2V)$, then for WFT the complexity is $\mathcal{O}((KT_f+g)) = \mathcal{O}(K(T_f+RSKV))$.

Suppose \(K\) and \(E\) are of the same order of magnitude (in our experiments, \(K=10\) and \(E=40\), so this assumption is favorable to SFT). Then the key comparison is between \(RSKV\) and \(T_b\). For simplicity, we let \(T_b = cT_f\) and assume \(T_f \propto SP\), where \(S\) is the input length and \(P\) is the number of model parameters. Thus, \(T_b = cSP\), where \(c\) is a constant. For LoRA-SFT, although only a small number of parameters are updated, backward propagation still needs to transmit gradients through the full computation graph. Therefore, its backward cost remains proportional to the model size and input length, and we approximate it using the same form \(T_b = cSP\), potentially with a smaller constant factor than in full-parameter SFT.

With that, we need to compare $cP$ and $RKV$. In experiments, $R=400, V\approx1.5*10^5,K=10$, so $RKV$ is about $6*10^8$ and $cP$ is at least $1.6*10^{10}$ for $c\geq2$~\citep{trainhoff2022,wiedemann2020dithered} if using 8B models. Under these assumptions, WFT has an estimated computation cost 20 times lower than SFT for the considered configuration. This complexity gap is roughly consistent with the difference in training time we observe in practice. The SFT results presented in Table~\ref{tab:results} for the paper task use $B = N*E = 50*40 = 2000$, which means that the effective budget $B$ of WFT is approximately 100.

\begin{figure*}[t]
    \centering
    \resizebox{0.9\textwidth}{!}{
        \begin{minipage}{\textwidth}
            \centering
            \begin{subfigure}[t]{0.55\textwidth}
                \centering
                \includegraphics[width=\textwidth]{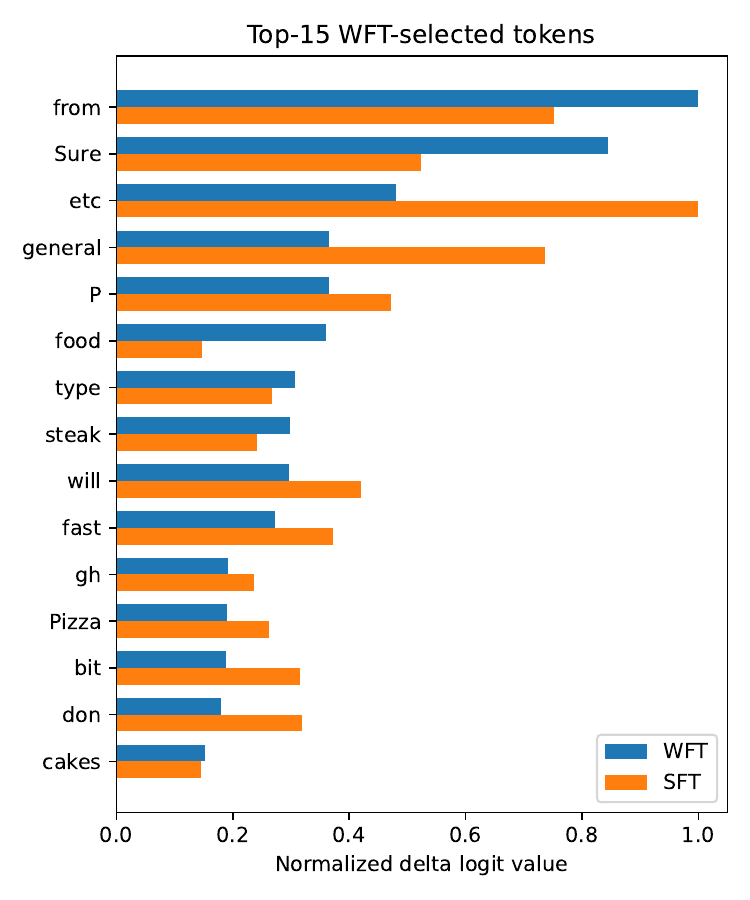}
                \subcaption{Normalized logit shifts of the top-15 tokens ranked by
                \(\Delta z^{\text{WFT}}\), compared between one-step WFT and SFT.}
                \label{fig:topicftsft}
            \end{subfigure}
            \hspace{0.02\textwidth}
            \begin{subfigure}[t]{0.40\textwidth}
                \centering
                \includegraphics[width=0.7\textwidth]{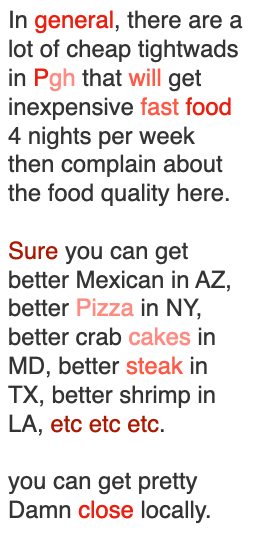}
                \subcaption{Excerpts from the author's historical writings.
                Highlighted tokens are colored according to their rank under
                \(\Delta z^{\text{WFT}}\).}
                \label{fig:qualitative_history}
            \end{subfigure}
        \end{minipage}
    }
    \caption{Qualitative comparison between WFT and SFT on a randomly sampled author.}
    \label{fig:qualitative_case}
\end{figure*}
\subsection{Qualitative Example}\label{sec:qual}
To show how WFT approximates SFT in logit space, we present a case study on a randomly sampled author from the Twitter dataset. Using the same set of the author’s writings, we perform one SFT step and one WFT step, and then apply both updates to the same input prompt. This yields two updated next-token logit vectors for comparison. Let \(\Delta z^{\text{SFT}}\) and \(\Delta z^{\text{WFT}}\) denote the corresponding logit shifts relative to the base model at the same decoding position. 
We compare the two updates on the top-\(k\) tokens ranked by \(\Delta z^{\text{WFT}}\), and compute the cosine similarity between \(\Delta z^{\text{SFT}}\) and \(\Delta z^{\text{WFT}}\) on this subset. The cosine similarity reaches 0.875 on the top-10 tokens, which account for more than 95\% of the next-token probability under both methods. On the top-50 tokens, the similarity still remains 0.601. 

Figure~\ref{fig:topicftsft} further visualizes the normalized logit shifts of the top-15 tokens ranked by \(\Delta z^{\text{WFT}}\) after removing punctuation marks. We observe that the tokens most strongly upweighted by WFT are also assigned positive and large shifts by SFT, indicating a similar relative emphasis on the most affected tokens. Figure~\ref{fig:qualitative_history} further shows that many of these highly upweighted tokens also occur in the sampled author's historical writings. These observations suggest that WFT recovers a logit-update direction similar to that induced by one-step SFT.

\subsection{Other advantages of WFT}
Beyond speed, WFT offers several practical advantages: 1) it does not generate any new weights, so there is no need to store new model weights for each author. 2) the method can easily adapt to new tasks, whereas SFT often requires separate training for different tasks. WFT can also readily incorporate new author data by computing the relevant variables for the additional data, while SFT would require retraining. 3) since WFT does not modify the model parameters, it does not affect the model’s general performance on other tasks. By contrast, tuning on personal data may lead to catastrophic forgetting~\citep{kaushik2021understandingcatastrophicforgettingremembering}. 4) WFT does not require direct access to model weights and operates only through stochastic forward passes and output logits.

\section{Conclusion}
We introduced WFT, a training-free method for personalization that approximates SFT by transporting supervised residuals from training sequences to prompt prefixes in logit space. The cross-prefix transport operator, estimated through dropout-induced forward-pass covariance, supports this transfer without gradient computation or parameter update. Across three tasks, WFT achieves the best average performance across datasets, performs comparably to SFT, and consistently outperforms other lightweight personalization baselines on average. Budget-controlled comparisons further show that WFT can approach the performance of much more expensive SFT while using only a small fraction of the computation. The results suggest that for personalization settings where per-author adaptation must be both effective and lightweight, much of the distributional effect of fine-tuning can be recovered without the fine-tuning itself, making WFT a practical training-free alternative to SFT. WFT operates in logit space and does not reproduce the internal representation
changes induced by parameter fine-tuning. Extending WFT to reasoning-heavy, long-form, and cross-task adaptation is an important direction for future work.

\newpage
\section*{Ethics Statement}
We study personalization in language models and recognize that such settings may involve user-associated text. Our method does not create or store separate user-specific model weights, and instead performs adaptation through inference-time logit corrections, which can reduce the need to maintain personalized parameter copies. In this paper, experiments are conducted on a public benchmark used for research purposes, and we do not introduce new private user data. For real-world deployment, we emphasize that personalization should be applied only with appropriate consent and retention safeguards.

\section*{Acknowledgments}

YW was supported in part by funding from the Office of Naval Research under grant N00014-23-1-2590, the National Science Foundation under grant No. 2310831, No. 2428059, No. 2435696, No. 2440954, a Michigan Institute for Data Science Propelling Original Data Science (PODS) grant, Two Sigma Investments LP, and  LG Management Development Institute AI Research.

\bibliography{colm2026_conference}
\bibliographystyle{colm2026_conference}

\appendix

\section{Formal Justification: Dropout Covariance as an NTK Approximation}
\label{app:ntk-approx}

We now show formally that the dropout-estimated transport operator
$\widehat{M}_{t \leftarrow s}$ is a ridge-regularized approximation to the
cross-prefix neural tangent kernel (NTK) action.  This bridges the gap
between the logit-space derivation in Sections~3.1--3.2 and standard
NTK theory, making precise the sense in which WFT approximates one-step
SFT.

\begin{proposition}[Dropout Covariance Approximates the Cross-Prefix NTK Action]
\label{prop:ntk}
Let $z(c;\theta)\in\mathbb{R}^V$ denote the logits of a language model with
parameters $\theta\in\mathbb{R}^P$ for prefix $c$, and let
$J_c = \frac{\partial z(c;\theta)}{\partial \theta}\Big|_{\theta_0}
       \in \mathbb{R}^{V \times P}$
be the Jacobian at the frozen parameters $\theta_0$.
Define the cross-prefix and self-prefix NTK Gram matrices:
\[
  \Theta_{t,s} \;=\; J_t\, J_s^\top \;\in\;\mathbb{R}^{V\times V},
  \qquad
  \Theta_{s,s} \;=\; J_s\, J_s^\top \;\in\;\mathbb{R}^{V\times V}.
\]
When \(\Theta_{s,s}\) is singular, its inverse below is understood as the
Moore--Penrose pseudoinverse.

Suppose the following conditions hold:
\begin{enumerate}
  \item \textbf{(Local linearity.)} 
    The logit function is well-approximated to first order around 
    $\theta_0$: for every dropout-induced effective parameter 
    $\theta_0 + \delta\theta_k$,
    \[
      z^{[k]}(c)
      \;=\; z(c;\,\theta_0 + \delta\theta_k)
      \;\approx\; z(c;\theta_0) + J_c\,\delta\theta_k.
    \]

  \item \textbf{(Isotropic dropout perturbations.)} 
    The population covariance of the dropout-induced parameter 
    perturbations satisfies
    $\Sigma_\theta \;=\; \mathrm{Cov}(\delta\theta) \;=\; \sigma^2 I_P$
    for some $\sigma^2 > 0$.
\end{enumerate}
Then:

\medskip\noindent
\textbf{(a)} \emph{(NTK transport.)}
For $r_s$ in the column span of $\Theta_{s,s}$, the minimum-norm
parameter update that realizes the source-side logit update $\eta r_s$
induces the following logit shift at prompt prefix $x_{1:t}$:
\[
  \Delta z_t^{\mathrm{SFT}}
  \;=\; \eta\, \Theta_{t,s}\,\Theta_{s,s}^{-1}\, r_s.
\]
The corresponding cross-prefix transport operator is
$M_{t\leftarrow s} = \Theta_{t,s}\,\Theta_{s,s}^{-1}$.

\medskip\noindent
\textbf{(b)} \emph{(Dropout estimation.)}
The dropout-estimated operator satisfies, as $K\to\infty$,
\[
  \widehat{M}_{t\leftarrow s}
  \;\xrightarrow{\;p\;}
  \Theta_{t,s}\bigl(\Theta_{s,s} + \lambda' I\bigr)^{-1},
  \qquad
  \lambda' = \frac{\lambda}{\sigma^2},
\]
which is a ridge-regularized approximation to $M_{t\leftarrow s}$.
In particular, for any residual $r_s$ in the column span of $\Theta_{s,s}$,
\[
  \widehat{M}_{t\leftarrow s}\, r_s
  \;\longrightarrow\;
  M_{t\leftarrow s}\, r_s
  \qquad \text{as } \lambda' \to 0.
\]
\end{proposition}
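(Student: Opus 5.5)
Part (a) is a minimum-norm least-squares computation. Part (b) combines the law of large numbers with continuity of matrix inversion. For (a), I will pose the problem
\[
\min_{\Delta\theta}\ \|\Delta\theta\|^2 \quad\text{s.t.}\quad J_s\Delta\theta=\eta r_s .
\]
Since $r_s$ lies in the column span of $\Theta_{s,s}=J_sJ_s^\top$, which equals the column span of $J_s$, the constraint is feasible. The minimum-norm solution is $\Delta\theta^\star=J_s^+\eta r_s=J_s^\top(J_sJ_s^\top)^{+}\eta r_s$. This follows either from the standard pseudoinverse identity $J^+=J^\top(JJ^\top)^+$, or from Lagrange multipliers: the solution must lie in the row space of $J_s$, so $\Delta\theta=J_s^\top u$ with $\Theta_{s,s}u=\eta r_s$. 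Applying the linearization at the prompt prefix then gives $\Delta z_t=J_t\Delta\theta^\star=\eta\,\Theta_{t,s}\Theta_{s,s}^{+}r_s$. This identifies $M_{t\leftarrow s}=\Theta_{t,s}\Theta_{s,s}^{+}$.

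For (b), I will substitute assumption 1 into the centered samples. Because the prompt and training passes share the same dropout realization, $t_k\approx J_t(\delta\theta_k-\overline{\delta\theta})$ and $s_k\approx J_s(\delta\theta_k-\overline{\delta\theta})$ with the same $\delta\theta_k$. The empirical covariances then become $\Sigma_{t,s}=J_t\widehat{\Sigma}_\theta J_s^\top$ and $\Sigma_{s,s}=J_s\widehat{\Sigma}_\theta J_s^\top$, where $\widehat\Sigma_\theta$ is the sample covariance of the $\delta\theta_k$. The masks are i.i.d. across $k$, so the weak law of large numbers (finite second moments suffice) gives $\widehat\Sigma_\theta\to\sigma^2 I_P$ in probability. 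Hence $\Sigma_{t,s}\to\sigma^2\Theta_{t,s}$ and $\Sigma_{s,s}\to\sigma^2\Theta_{s,s}$. The map $(A,B)\mapsto A(B+\lambda I)^{-1}$ is continuous on the set where $B\succeq0$, because there $B+\lambda I$ is invertible. The continuous mapping theorem therefore yields
\[
\widehat M_{t\leftarrow s}\to\sigma^2\Theta_{t,s}(\sigma^2\Theta_{s,s}+\lambda I)^{-1}=\Theta_{t,s}(\Theta_{s,s}+\lambda' I)^{-1}.
\]

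For the limit $\lambda'\to0$, I will diagonalize $\Theta_{s,s}=U\,\mathrm{diag}(\mu_i)U^\top$. For $r_s$ in the range of $\Theta_{s,s}$, only eigenvalues $\mu_i>0$ contribute, and $(\Theta_{s,s}+\lambda'I)^{-1}r_s=\sum_{\mu_i>0}(\mu_i+\lambda')^{-1}u_iu_i^\top r_s\to\Theta_{s,s}^{+}r_s$. Multiplying by $\Theta_{t,s}$ gives the claim. Singular directions need no separate treatment: they are annihilated by the range assumption, so the pseudoinverse convention is consistent.

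The main obstacle is rigor in (b). Assumption 1 is stated only as an approximation ($\approx$), so I will treat it as exact equality within the proposition and note that any first-order remainder contributes an additional bias term. I also need to state that dropout perturbations have finite second moments so the law of large numbers applies.
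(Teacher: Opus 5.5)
Your proposal is correct and follows the same route as the paper. Part (a) is the minimum-norm pseudoinverse solution of $J_s\Delta\theta=\eta r_s$, pushed through $J_t$. Part (b) substitutes the linearization into the sample covariances, lets them concentrate to $\sigma^2\Theta_{t,s}$ and $\sigma^2\Theta_{s,s}$, and then puts these limits into the ridge estimator. You are somewhat more careful than the paper: you track the $\overline{\delta\theta}$ centering term, invoke the continuous mapping theorem with $\lambda>0$, and justify the $\lambda'\to0$ limit spectrally. These are refinements of the paper's argument, not a different one.
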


\begin{proof}
\textbf{Part (a).}
By the KL-prox NGD formulation (Section~3), a natural-gradient step
at training position $s$ with step size $\eta$ produces the logit-space
update $\eta\, r_s$ at position $s$, i.e.,
$z_s^{\mathrm{new}} = z_s + \eta\, r_s$.

In the NTK (lazy-training) regime, the logit function is locally linear in
$\theta$, so a parameter displacement $\Delta\theta$ induces
$\Delta z(c) = J_c\,\Delta\theta$ at any prefix $c$.
The parameter update $\Delta\theta$ that realises the prescribed logit shift
$\eta\, r_s$ at position $s$ while minimising $\|\Delta\theta\|^2$ is:
\[
  \Delta\theta
  \;=\; \eta\, J_s^\top\bigl(J_s\, J_s^\top\bigr)^{-1} r_s
  \;=\; \eta\, J_s^\top\, \Theta_{s,s}^{-1}\, r_s,
\]

which is the Moore--Penrose pseudoinverse solution to
$J_s\,\Delta\theta = \eta\, r_s$.

The induced logit shift at the prompt prefix $x_{1:t}$ is then:
\[
  \Delta z_t^{\mathrm{SFT}}
  \;=\; J_t\,\Delta\theta
  \;=\; \eta\, J_t\, J_s^\top\, \Theta_{s,s}^{-1}\, r_s
  \;=\; \eta\, \Theta_{t,s}\,\Theta_{s,s}^{-1}\, r_s.
\]
This identifies the corresponding minimum-norm transport operator as
$M_{t\leftarrow s} = \Theta_{t,s}\,\Theta_{s,s}^{-1}$.

\medskip
\textbf{Part (b).}
Under the local linearity assumption, the centred dropout deviations are:
\[
  t_k = z^{[k]}(x_{1:t}) - \bar{z}(x_{1:t})
  \;\approx\; J_t\,\delta\theta_k,
  \qquad
  s_k = z^{[k]}(\tilde{x}_{1:s}) - \bar{z}(\tilde{x}_{1:s})
  \;\approx\; J_s\,\delta\theta_k.
\]
The empirical cross-covariance and self-covariance concentrate around
their population counterparts as $K\to\infty$:
\begin{align*}
  \Sigma_{t,s}
  &= \frac{1}{K-1}\sum_{k=1}^K t_k\, s_k^\top
  \;\xrightarrow{\;p\;}\;
  J_t\,\Sigma_\theta\, J_s^\top
  \;=\; \sigma^2\, \Theta_{t,s}, \\[4pt]
  \Sigma_{s,s}
  &= \frac{1}{K-1}\sum_{k=1}^K s_k\, s_k^\top
  \;\xrightarrow{\;p\;}\;
  J_s\,\Sigma_\theta\, J_s^\top
  \;=\; \sigma^2\, \Theta_{s,s}.
\end{align*}
Substituting into the ridge-regularized estimator:
\begin{align*}
  \widehat{M}_{t\leftarrow s}
  &= \Sigma_{t,s}\bigl(\Sigma_{s,s} + \lambda I\bigr)^{-1} \\
  &\xrightarrow{\;p\;}\;
    \sigma^2\,\Theta_{t,s}
    \bigl(\sigma^2\,\Theta_{s,s} + \lambda I\bigr)^{-1} \\
  &= \Theta_{t,s}
    \bigl(\Theta_{s,s} + \tfrac{\lambda}{\sigma^2}\, I\bigr)^{-1} \\
  &= \Theta_{t,s}
    \bigl(\Theta_{s,s} + \lambda' I\bigr)^{-1}.
\end{align*}
For any $r_s$ in the column span of $\Theta_{s,s}$, taking $\lambda'\to 0$
recovers $\Theta_{t,s}\,\Theta_{s,s}^{-1} r_s = M_{t\leftarrow s}\, r_s$.
\end{proof}

\begin{remark}[Role of ridge regularization]
In practice $K \ll V$, so $\Sigma_{s,s}$ has rank at most $K-1$ and the
unregularized inverse does not exist in $\mathbb{R}^{V\times V}$.
The ridge term $\lambda I$ simultaneously (i)~ensures numerical stability,
(ii)~acts as a spectral filter that suppresses directions poorly estimated
by the finite dropout sample, and (iii)~provides implicit regularization
analogous to early stopping in gradient descent.
\end{remark}

\begin{remark}[Connection to the qualitative analysis]
Proposition~\ref{prop:ntk} predicts that
$\widehat{M}_{t\leftarrow s}\, r_s$ should align with the logit shift
produced by one SFT step. The qualitative example shown in~\ref{sec:qual} provides empirical support for this prediction: the practical dropout-based approximation captures logit-update directions that align with those induced by one-step SFT over the subspace containing more than 95\% of the next-token probability mass.
\end{remark}

\section{Logits Update Implementation and Complexity Analysis}
\label{app:acc_update}

For a training sequence of length $S$, we run $K$ stochastic forward passes
(with different dropout masks) through the frozen model, obtaining logit
vectors at each source position $s$. Following the notation in
Sec.~\ref{sec:M}, we form the centered deviations
\[
    u_s^{(k)}
    =
    z^{[k]}(\tilde{x}_{1:s})
    -
    \overline{z}(\tilde{x}_{1:s}),
\]
and collect them into
$U_s = [u_s^{(1)};\dots;u_s^{(K)}] \in \mathbb{R}^{K \times V}$.
The mean logit $\bar{z}_s$ is used to initialize
$z_s^{(0)} = \bar{z}_s$, with initial residual
\[
    r_s^{(0)}
    =
    y_s - \mathrm{softmax}(z_s^{(0)}).
\]

Then at each step $j = 0, 1, \dots, R-1$:
\begin{enumerate}

\item \textbf{Compute the transport-weighted update direction.}
For each source position $s$, define the empirical self-covariance
\[
    \Sigma_{s,s}
    =
    \frac{1}{K-1}U_s^\top U_s
    \in \mathbb{R}^{V\times V},
\]
and the ridge-regularized weight vector
\[
    w_s^{(j)}
    =
    \left(\Sigma_{s,s}+\lambda I_V\right)^{-1}
    r_s^{(j)}
    \in \mathbb{R}^{V}.
\]
The matrix $\Sigma_{s,s}$ is not materialized in the implementation.
Instead, $w_s^{(j)}$ is computed through the dual Gram matrix
\[
    G_s
    =
    \frac{1}{K-1}U_sU_s^\top
    \in \mathbb{R}^{K\times K}
\]
using the Woodbury identity, as detailed below.

For the source-to-target interaction, the implementation uses a
vocabulary-coordinate-wise approximation. We aggregate the source-side
quantities into a shared coefficient matrix
$B^{(j)} \in \mathbb{R}^{K\times V}$:
\[
    B_{k,v}^{(j)}
    =
    \frac{1}{S}
    \sum_{s=1}^{S}
    U_s(k,v)\,w_s^{(j)}(v).
\]
This matrix summarizes the source-side update associated with each
dropout direction and vocabulary coordinate.

\item \textbf{Update source logits and refresh residuals.}
The source logits are updated directly by the residual:
\[
    z_s^{(j+1)}
    =
    z_s^{(j)} + \eta\,r_s^{(j)},
\]
and the residual is recomputed exactly from the updated logits:
\[
    r_s^{(j+1)}
    =
    y_s-\mathrm{softmax}(z_s^{(j+1)}),
\]
as mentioned in Sec.~\ref{sec:M}.

\end{enumerate}

During the $R$-step trajectory, the coefficient matrices $B^{(j)}$ are
computed. In practice, we retain $B^{(j)}$ every 20 steps to reduce memory
and computation.

\paragraph{Inference.}
At inference, for each generated token at position $t$, we run $K$ forward
passes on the current prefix to obtain target tangents
$T_0=[t_1;\dots;t_K]\in\mathbb{R}^{K\times V}$.
Using the stored coefficient matrices, the accumulated logit shift from
Sec.~\ref{sec:M} is evaluated as
\[
    \Delta z_t^{(R)}
    =
    \eta
    \sum_{j=0}^{R-1}
    \frac{1}{K-1}
    \sum_{k=1}^{K}
    t_k\odot B_k^{(j)}.
\]

\paragraph{Efficient computation and complexity.}
The full estimated operator motivating the implementation is
\[
    \widehat{M}_{t\leftarrow s}r_s
    =
    \Sigma_{t,s}
    \left(\Sigma_{s,s}+\lambda I_V\right)^{-1}
    r_s.
\]
Denote the ridge-regularized source-side vector by
\[
    w_s
    =
    \left(\Sigma_{s,s}+\lambda I_V\right)^{-1}
    r_s
    \in\mathbb{R}^{V}.
\]
The implementation retains the full ridge-regularized self-covariance
action through $w_s$, while applying the cross-prefix interaction
coordinate-wise as described above. Neither $\Sigma_{s,s}$ nor
$\Sigma_{t,s}$ is explicitly materialized.

The bottleneck is computing $w_s$: naively inverting
$\Sigma_{s,s}\in\mathbb{R}^{V\times V}$ costs $O(V^3)$, which is
infeasible for large vocabularies. Since
\[
    \Sigma_{s,s}
    =
    \frac{1}{K-1}U_s^\top U_s,
\]
the Woodbury matrix identity~\citep{woodbury1950inverting} gives
\[
    w_s
    =
    \frac{1}{\lambda}r_s
    -
    \frac{1}{\lambda^2(K-1)}
    U_s^\top
    \left(
        I_K
        +
        \frac{1}{\lambda(K-1)}
        U_sU_s^\top
    \right)^{-1}
    U_s r_s.
\]
This requires only solving a system involving the $K\times K$ matrix
\[
    I_K
    +
    \frac{1}{\lambda(K-1)}
    U_sU_s^\top
\]
at cost $O(K^3)$, together with matrix-vector products costing
$O(K^2V)$ per source position. Since $K<S$, summing over all $S$
positions gives $O(SK^2V)$ per step, which dominates the other operations
(aggregating $B^{(j)}$: $O(SKV)$; computing residuals: $O(SV)$).
Therefore, the total offline cost over $R$ steps is
\[
    g
    =
    O\!\left(
        R\cdot S\cdot K^2\cdot V
    \right).
\]
This cost involves no gradient computation; the model parameters remain
frozen throughout.

\section{Experiment Details}
\subsection{Dataset}\label{app:data}
Because the training cost of SFT is high, we did not perform SFT on all 1K+ authors in each LaMP dataset. Instead, for each dataset, we randomly selected 50 authors from the original test authors as the new test authors. In addition, WFT does not use the original LaMP training data because it is training-free. We use 10 authors from the dev set to select hyperparameters. SFT also does not use the original training data because our SFT is performed per author. We use each test author’s historical data as the SFT training data, and do not require data from other authors in the original training set. Prefix tuning, however, uses the original training data in LaMP to train the prefix network, since we hope that the patterns learned from the training authors can generalize to new authors and help achieve personalization.

Here are training texts for different datasets:

Paper Title: "Generate a title for the following abstract: [ABSTRACT].
Title: [TITLE]"

News Headline: "Generate a headline for the following article: [ARTICLE].
Headline: [HEADLINE]"

Since the Twitter data only provides authors’ original tweets and does not include paraphrase ground truth, the training input consists of batches of writing excerpts only for each author. The content in brackets above will be replaced with the actual author data.
\subsection{Methods}\label{app:methods}
\subsubsection{WFT}

We run $K=10$ dropout-perturbed forward passes per sequence. The offline trajectory is computed for $R=400$ steps with $\eta = 5\times 10^{-3}$ and ridge regularization
$\lambda = 10^{-4}$. To reduce memory, the coefficient matrices $B^{(j)}$
are stored every 20 steps rather than at every step. $\eta$ and $\lambda$ were selected by grid search on a held-out set of development authors. Specifically, we
searched $\eta \in \{10^{-4},5\times10^{-4},10^{-3}, 5\times10^{-3}, 10^{-2}\}$,
$\lambda \in \{10^{-5}, 10^{-4}, 10^{-3}\}$. $K$ and $R$ were
fixed based on computational budget.

\subsubsection{SFT}\label{app:sft}
All SFT experiments fine-tune the foundation models using LoRA~\citep{hu2022lora}
with rank $r=8$, scaling factor $\alpha=32$. For the full experiments, the model is trained for 40 epochs on up to 50 author-specific training
sequences, with batch size 1 and gradient accumulation over 32 steps. We use the AdamW optimizer with a learning rate of $10^{-3}$. The model is trained in bfloat16 precision. The learning rate was selected by grid search on a held-out set of development authors over $\{10^{-5}, 10^{-4},10^{-3}, 10^{-2}\}$.
\subsubsection{Prefix-tuning}\label{app:prefix-tuning}

Following previous work~\citep{liu-etal-2023-recap,huber2025embeddingtoprefixparameterefficientpersonalizationpretrained}, our prefix-tuning method consists of two stages: a style encoder trained
offline across a large volume of authors, and a prefix decoder that maps the style embedding
to per-layer key-value prefix tensors injected at inference time.

We train a style encoder on top of a frozen RoBERTa-base~\citep{liu2019roberta} model.
The encoder takes an author's text segments as input, applies mean pooling
over the last hidden states, and passes the result
through a two-layer projection head (hidden dimension 512, output dimension
256, GELU activation~\citep{hendrycks2016gaussian}) with $\ell_2$ normalization.
The encoder is trained with supervised contrastive
loss~\citep{khosla2020supervised} at temperature $\tau=0.07$, using batches of $B=32$ authors with $K=4$ segments each.
We use AdamW with learning rate $10^{-4}$ and weight decay $0.01$ for
100 epochs. The checkpoint with the best author retrieval recall@1 on a
held-out author set is selected.

Then, a three-layer MLP with Tanh activation maps the 256-dimensional
style embedding to key-value prefix tensors of length $L_p$ for all attention layers of the frozen foundation models, shared across layers.
The prefix decoder is trained end-to-end on the generation objective
with learning rate $2\times 10^{-4}$, using prefix lengths
$L_p=32$.

\subsubsection{In-Context Prompting}\label{app:icp}
We experimented with varying the number of excerpts (up to $k=20$ per author) and found that performance peaked with five excerpts. We speculate that as the number of excerpts increases, they may not provide additional useful signals about the author’s style and may also exceed the input length that the model can effectively handle. Following~\citet{chakrabarty2025readerspreferoutputsai}, these excerpts are concatenated into a prompt together with the task query and a system message instructing the model to imitate the author's style. Here is an example of an input prompt for the Twitter task:
\begin{quote}
\texttt{You are a careful literary style imitator. Learn stylistic signals from the provided Author Excerpts (lexicon, syntax, cadence, rhetoric, tone, macro-structure) and paraphrase the given tweet in that author's voice. Output the piece only.\\
Rules:\\
- Produce a single coherent piece matching the style suggested by the excerpts.\\
- Do not quote any excerpt verbatim.\\
- No prefaces, no analysis, no headings---just the final prose.}

\medskip
\texttt{Author Excerpts:\\
H1: [TEXT1]\\
H2: [TEXT2]\\
$\cdots$\\
H5: [TEXT5]\\}

\medskip
\texttt{Paraphrase the following tweet: i deleted the app again. we'll see how long this lasts.\\
Output:}
\end{quote}

\subsubsection{CHAMELEON}

CHAMELEON introduces two different approaches for latent space editing, consisting of individual and group alignment method. In Table \ref{tab:results}, we compared CHAMELEON's individual alignment method with WFT, since its group alignment method only introduces marginal performance lift on the validation and test datasets in LaMP. In our experiment, we randomly sampled top-20 history items for each user, built 8 synthetic responses for each condition: personalized prompts and non-personalized prompts, and conducted latent space editing for all MLP layers within the base model. Since the original CHAMELEON only provided the prompts for Twitter dataset which contains dataset-specific key words such as "the tweet", we replaced the key words to support personalization for Paper and News dataset, preventing confusing the LLMs. We evaluated the various configurations to derive the subspace directions for personalized and neutral responses, and recognized that the proposed configuration that employs SVD for personalized direction and CSS identification for neutral direction achieves the optimal performance.

Generation of all methods mentioned above uses top-$p$ sampling with $p=0.8$, temperature
$0.7$, and top-$k=20$.
\end{document}